\newtheorem{theorem}{Theorem}[section]
\newtheorem{lemma}[theorem]{Lemma}
\title{Control-Theoretic Techniques for Online Adaptation \\ of Deep Neural Networks in Dynamical Systems}
\author{Jacob G.~Elkins \\
	Mechanical and Aerospace Engineering Department\\
	The University of Alabama in Huntsville\\
	Huntsville, AL, USA \\
	\texttt{jacob.elkins@uah.edu} \\
	\And
	Farbod Fahimi \\
	Mechanical and Aerospace Engineering Department\\
	The University of Alabama in Huntsville\\
	Huntsville, AL, USA \\
	\texttt{farbod.fahimi@uah.edu} \\
}
\begin{document}
\maketitle

\begin{abstract}
Deep neural networks (DNNs), trained with gradient-based optimization and backpropagation, are currently the primary tool in modern artificial intelligence, machine learning, and data science. In many applications, DNNs are trained offline, through supervised learning or reinforcement learning, and deployed online for inference. However, training DNNs with standard backpropagation and gradient-based optimization gives no intrinsic performance guarantees or bounds on the DNN, which is essential for applications such as controls. Additionally, many offline-training and online-inference problems, such as sim2real transfer of reinforcement learning policies, experience domain shift from the training distribution to the real-world distribution. To address these stability and transfer learning issues, we propose using techniques from control theory to update DNN parameters online. We formulate the fully-connected feedforward DNN as a continuous-time dynamical system, and  we propose novel last-layer update laws that guarantee desirable error convergence under various conditions on the time derivative of the DNN input vector. We further show that training the DNN under spectral normalization controls the upper bound of the error trajectories of the online DNN predictions, which is desirable when numerically differentiated quantities or noisy state measurements are input to the DNN. The proposed online DNN adaptation laws are validated in simulation to learn the dynamics of the Van der Pol system under domain shift, where parameters are varied in inference from the training dataset. The simulations demonstrate the effectiveness of using control-theoretic techniques to derive performance improvements and guarantees in DNN-based learning systems.
\end{abstract}

\keywords{Deep learning, neural networks, online learning, control theory}

\section{Introduction}
Deep neural networks (DNNs) are data-driven, biologically-inspired nonlinear function approximators. DNNs are currently the primary tool in modern artificial intelligence, machine learning, and data science; driving popular breakthroughs in reinforcement learning \cite{mnih2015human}, natural language processing \cite{gpt4}, and content generation \cite{dalle2}. DNNs have enjoyed success due to their generality, ability to learn intricacies from data, and ease of implementation for parallel computation. The modeling and function approximation capability of DNNs has recently exploded, largely due to advancements in computer hardware and increased data generation.

Most commonly, DNNs are trained on an offline dataset via gradient-based optimization and backpropagation \cite{RHW, werbos}. These DNNs are then deployed online for inference, with the DNN model's parameters static, only optimized during the pre-deployment training. If the underlying process that generates the DNN training data changes, known as concept drift or domain shift, DNN performance can suffer \cite{shift_paper, shift_book}. It is desirable for DNN-based models to improve their performance over time during deployment in the real-world, learning from the novel data being processed by the DNN model. However, efficiently updating DNN model parameters from novel signals to guarantee improved performance is not straightforward. Retraining the entire DNN model on only the novel data can introduce catastrophic forgetting, with the model parameters over-optimizing to the recent inputs. However, retraining the entire DNN model with the fully updated dataset can become computationally expensive and inefficient at each datapoint \cite{online_learning_survey}. Relevant problem areas in deep learning include transfer learning, domain adaptation, and domain generalization \cite{transfer_survey, domain_adapt_bounds, pretrain_for_domain_adapt, domain_gen}.

In many DNN applications, data is processed by the deployed DNN model sequentially in time, especially when the DNN is approximating a dynamical quantity or system. Common examples of online DNN deployment include forecast models \cite{forecast_survey} and policies trained with reinforcement learning \cite{sim2real1}. In these examples, there are no intrinsic properties on the DNN trained with gradient-based backpropagation, such as boundedness of DNN outputs or guarantees of convergence. 

The motivating example for this work is the sim2real transfer of reinforcement learning (RL) policies, particularly when policies are used as controllers \cite{sim2real1, sim2real2}. RL-based control policies are typically trained using a simulation of the real-world control problem, where the simulation is constructed to model the real-world physics as accurately as possible. Once desirable performance in simulation is achieved, the policy is then deployed onto the real-world control problem \cite{sim2real_paper}. However, the real-world is highly nonlinear, complex, and difficult to model; and obtaining dynamic models for use in simulation can be expensive \cite{sys_id_book}. A discrepancy between the simulated problem and the real-world problem represents a shift in the training data distribution and the test data distribution; and standard feedforward DNNs trained with conventional RL are not designed to adequately adapt to this distribution shift, resulting in degraded policy performance in the real-world. Common methods for solving the sim2real ``reality gap" include domain randomization \cite{domain_random1, domain_random2, domain_random3}, enforcing robustness via adversarial training \cite{arl1, arl2}, and meta-learning \cite{meta_learning, sim2real_meta}.

Considering the important applications of DNNs to approximating dynamical systems, it is desirable to study how to train and implement DNNs to maximize online performance. In this work, we propose using techniques from control theory to evolve DNNs online during deployment to improve performance online. The mathematical rigor of control theory can be used to establish desirable properties on DNN outputs, such as performance bounds and convergence guarantees. When a DNN of arbitrary depth is deployed to control or predict a dynamical system, the DNN itself can be treated as a dynamical system. Assuming the training distribution is reasonably close to the target distribution, the output of the next-to-last layer forms a basis for function approximation of the last layer. Similar to the distribution shift and transfer learning approaches which retrain the last layer while freezing upper layers in the DNN \cite{labonte, spottune, lastlayer1, lastlayer2}, we evolve the last layer of the DNN to learn online during operation in a provably-stable manner, shown in the block diagram in Figure~\ref{fig:onnst_block_diag}. The novel update rule proposed in this work is based on the super-twisting algorithm (STA) \cite{levant_st, shtessel2014sliding, moreno_st}. The STA is used in control, observation, and differentiation, when finite-time asymptotic convergence is desired via a continuous control law \cite{shtessel2014sliding}. Using the STA, we further show that other deep learning techniques can be combined with our online adaptation law to improve performance, such as spectral normalization during training \cite{bartlett_sn, miyato_dnn, miyato_gan}. We show that, when the time derivative of the DNN input vector is noisy (from measurement or numerical differentiation), spectrally normalizing the DNN during training defines the upper bound of the convergence error.

\section{Motivating Example}

Consider the unforced Van der Pol equation \cite{khalil}, with dynamics written as

\begin{equation}
\label{eq:vanderpol}
\begin{split}
    \dot{z} &= \epsilon \left( z - \frac{1}{3}z^3 - \theta \right) \\
    \dot{\theta} &= z
\end{split}
\end{equation}

\noindent where $\epsilon$ is a constant parameter. The goal is to use a DNN to learn the unknown dynamics of $z$, assuming the dynamics of $\theta$ are known, and $z$, $\theta$, $\dot{z}$ can be measured. Training a DNN on a feature dataset $\mathcal{X} = \{z, \theta\}$ and a label dataset of $\mathcal{Y} = \{ \dot{z} \}$, generated by the nominal system of Equation~\eqref{eq:vanderpol} with $\epsilon = 1$, we can approximate $\dot{x}$ with a DNN to arbitrary accuracy, shown in Figure~\ref{fig:dnn_nom}.

\begin{figure}[!htbp]
\centering
\includegraphics[width=3in]{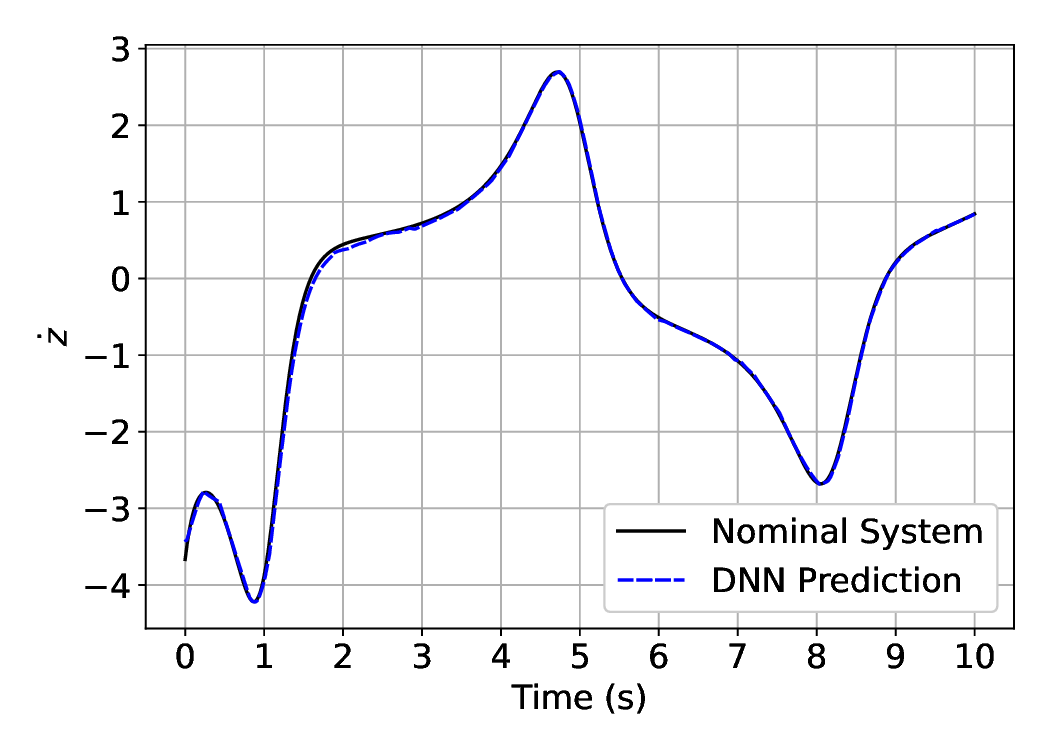}
\caption{DNN predictions on the nominal system ($\epsilon = 1$).}
\label{fig:dnn_nom}
\end{figure}

\noindent However, assuming a ``real" system of Equation~\eqref{eq:vanderpol} with $\epsilon = 1.5$, the DNN experiences domain shift, with the testing data (real system) generated from a different distribution than the training data (nominal system). Without any online learning or retraining, the DNN performance suffers, shown in Figure~\ref{fig:dnn_real}.

\begin{figure}[!htbp]
\centering
\includegraphics[width=3in]{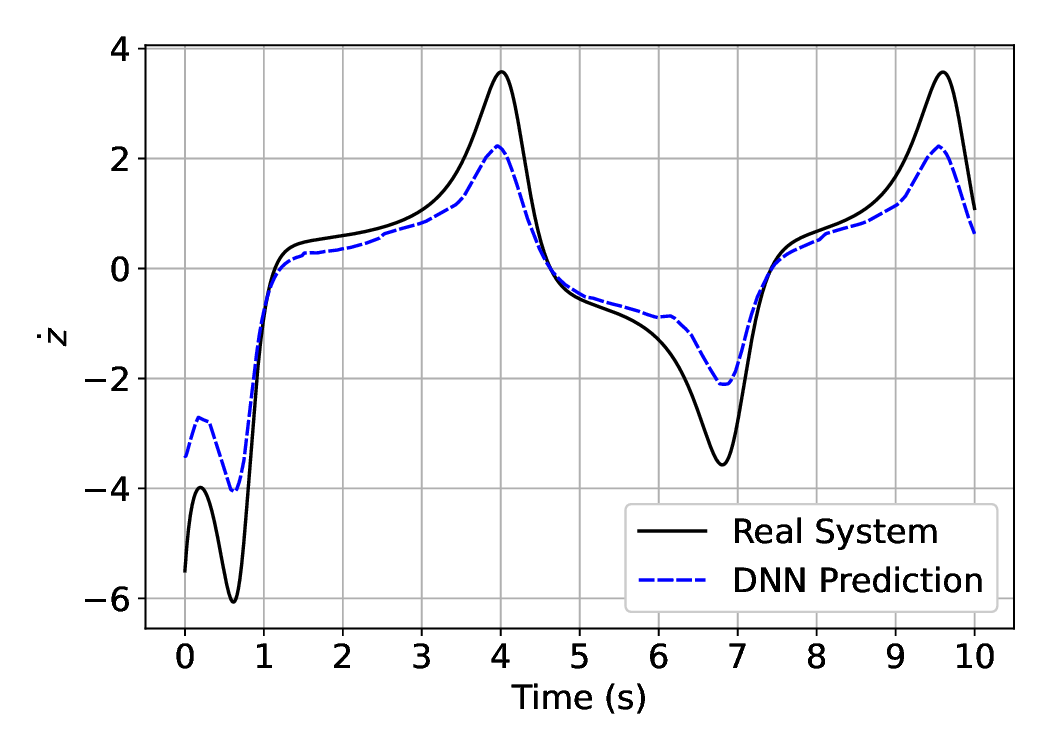}
\caption{DNN predictions on the real system ($\epsilon = 1.5$).}
\label{fig:dnn_real}
\end{figure}

\noindent Since the DNN is being trained to approximate a dynamical system in \eqref{eq:vanderpol}, we can consider the DNN itself as a dynamical system. Considering the DNN as a dynamical system to control allows the use of control theory to update the DNN online to achieve desirable performance on the real system without full retraining, shown in Figure~\ref{fig:dnn_online}.

\begin{figure}[!htbp]
\centering
\includegraphics[width=3in]{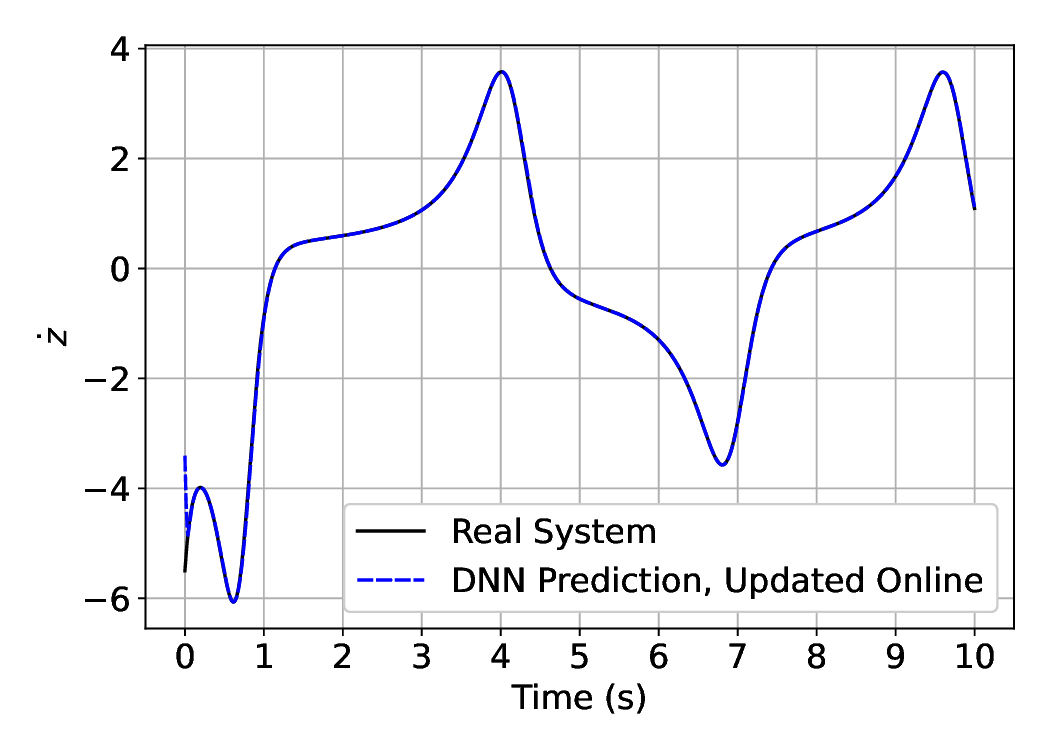}
\caption{Online-adapted DNN predictions on the real system ($\epsilon = 1.5$).}
\label{fig:dnn_online}
\end{figure}

\noindent The proposed online update rule takes advantage of the dynamical nature of the DNN predicting online to achieve desirable performance under domain shift. This is a common problem in transfer learning and reinforcement learning, notably in sim2real transfer of control policies \cite{sim2real1}. Our method is summarized in Figure~\ref{fig:onnst_block_diag} for a sim2real control example.

\begin{figure*}[!t]
\centering
\includegraphics[width=6.5in]{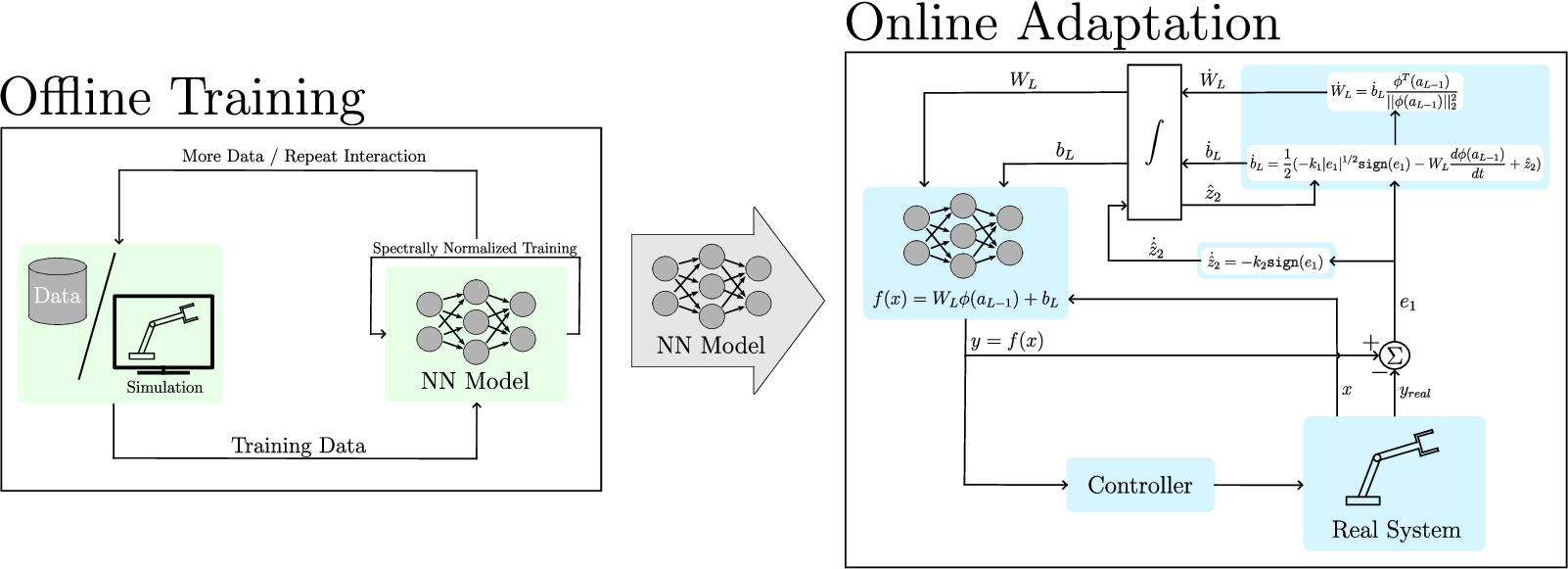}
\caption{Block diagram of the proposed online learning method, for a sim2real control example. The neural network model is first trained offline using conventional supervised or reinforcement learning under spectral normalization. The learned model, once deployed, is then updated online using the adaptation laws in Equations~\eqref{eq:b_update_st}, \eqref{eq:W_update_st}, and \eqref{eq:z_update_st}.}
\label{fig:onnst_block_diag}
\end{figure*}

\section{Related Work}

Neural networks have been studied extensively for use in control theory, beginning with notable works such as \cite{werbos_control} and \cite{narendra}. In most neural-network-based controllers, the neural network is simply used as an online adaptation instrument to guarantee a control objective (such as tracking error convergence) \cite{lewis96, sanner_older, chen_older, jag1, sahoo1, sahoo2, patil22}. However, while the mentioned works may be described as ``learning for controls," this work attempts to use controls for learning, utilizing the mathematical rigor and proven convergence properties of control theory to increase learning performance itself.

This work is inspired by the controllers derived in \cite{elkins_osc}, where a spacecraft attitude controller derived from pure reinforcement learning has no intrinsic stability guarantees, limiting its real-world application. This work also takes inspiration from \cite{neural_fly}, which combines offline DNN learning with an online adaptation law. \cite{neural_fly} specifically learns to compensate for the aerodynamic disturbances in quadcopter control, where the custom-design adversarial learning algorithm optimizes an invariant basis set for a Kalman-based online adaptation law. The works of \cite{neural_fly} and \cite{neural_lander} both use spectral normalization to bound the Lipschitz constant on the DNN for incorporation in a control law. 

 Fine-tuning DNN models by retraining lower layers is common in image processing and computer vision, such as in the works \cite{labonte, spottune, lastlayer1, lastlayer2} previously mentioned. Deep model reference adaptive control (MRAC) in \cite{deep_mrac} uses a similar offline DNN pretraining step with an online update rule based on concurrent learning, though the update rules and controller are specific to the linear MRAC case. The work in \cite{patil22} derives control-based update laws for both inner and outer layer weights for an arbitrary-depth DNN in an assumed system. Both \cite{patil22} and \cite{deep_mrac} utilize the projection operator to stabilize the DNN weight updates for the trajectory tracking error convergence control goal. The work in \cite{sun_lyap} uses a Lyapunov-based update law on the DNN outer layer, which is very similar to mean-squared-error-decreasing backpropagation. Further, DNN learning stability in \cite{sun_lyap} is guaranteed by a system control law. We aim to combine the ideas from these works to:

1) Improve \emph{learning} itself online, i.e. decreasing DNN approximation error over time, specifically under domain shift. This is contrasted to the decreasing trajectory tracking error of a control system by calculating a system input via some DNN output, as in the works listed above. While the proposed DNN update law can certainly be used inside an outer-loop controller (similar to \cite{neural_fly}), we aim to generalize to online DNN learning.

2) Stabilize the DNN outputs intrinsically, using control-theoretic updates to the DNN parameters itself.

3) Show how machine learning developments, such as spectral normalization, can give desirable control of prediction error convergence bounds in the proposed control-based online DNN update.

\section{Background}

In this section, we first detail the basics of DNN training and the feedforward DNN considered in this work. We then outline the theory and assumptions of the control algorithm used in this work, the super-twisting algorithm \cite{moreno_st}.

\subsection{DNN Basics}

This work considers a fully-connected, feedforward DNN $f(x) : \mathbb{R}^{n} \rightarrow \mathbb{R}^{m}$, with $L$ layers:

\begin{equation}
    \label{eq:relu_dnn}
    f(x) = W_{L} \phi(a_{L-1}) + b_L
\end{equation}

\noindent where $a_{L-1}$ is the output of the next-to-last layer, given as

\begin{equation}
    \label{eq:alm1}
    a_{L-1} = W_{L-1} \phi( W_{L-2} \phi( \cdots \phi( W_1 x + b_1)  \cdots ) + b_{L-2} ) + b_{L-1}
\end{equation}

\noindent where $W_i$ and $b_i$ is the weight and bias of the $i$\textsuperscript{th} layer, respectively; $\phi(\cdot)$ is a bounded nonlinear activation function, and $x \in \mathbb{R}^{n}$ is the neural network input. The DNN is assumed to be trained offline to optimize a feedback signal (such as minimizing error or maximizing reward) to be implemented for online inference, processing data and receiving error signals sequentially in time.

\subsection{The Super-Twisting Algorithm}

In model-based control of real-world systems, it is difficult to accurately model the real-world system being controlled. Control engineers have long studied how to design closed-loop controllers that render desirable performance under model uncertainty and discrepancy \cite{khalil, shtessel2014sliding}. One such robust control methodology is \emph{sliding mode control}, which aims to drive a nonlinear system to a desired manifold through the use of a discontinuous control signal \cite{spurgeon_smc}. The controller is typically designed for finite-time convergence of the sliding manifold to zero, which in turn is formulated for exponential convergence of error to zero in time. 

In conventional sliding mode control, the discontinuous control signal is undesirable, as it can introduce chatter and harm actuators in real-world systems. The \emph{super-twisting algorithm} (STA), introduced in \cite{levant_st}, is a sliding mode control algorithm that uses a continuous control signal to guarantee finitie-time convergence of the sliding variable, among other important qualities (cf. \cite{shtessel2014sliding}). In this work, we follow the STA formulation and Lyapunov stability proof given in \cite{moreno_st}.

Consider the STA system in state variable form

\begin{equation}
\label{eq:st_system}
\begin{split}
    \dot{x}_1 &= -k_1 |x_1|^{1/2} \texttt{sign}(x_1) + x_2 + p_1(x,t) \\
    \dot{x}_2 &= -k_2 \texttt{sign}(x_1) + p_2(x,t)
\end{split}
\end{equation}

\noindent where $x_1$, $x_2 \in \mathbb{R}$ are state variables, $k_1$, $k_2 \in \mathbb{R}^+$ are constant design gains, and $p_1(x,t)$, $p_2(x,t) : \mathbb{R}^2 \times \mathbb{R}_{\geq} \rightarrow \mathbb{R}$ are system perturbations. The right-hand side of Equation~\eqref{eq:st_system} is discontinuous, so the solution to the differential inclusion of \eqref{eq:st_system} is understood in the sense of Filippov \cite{filippov}. The STA is widely used in control, observation, and robust numerical differentiation \cite{shtessel2014sliding, levant_diff}.

Notably, the STA in Equation~\eqref{eq:st_system} is robust to the perturbation defined by

\begin{equation}
\label{eq:stable_perturb}
\begin{split}
    p_1(x, t) &= 0 \\
    |p_2(x, t)| &\leq D
\end{split}
\end{equation}

\noindent where $D \in \mathbb{R}^+$ is any constant, provided that the gains $k_1, k_2$ in Equation~\eqref{eq:st_system} are selected appropriately \cite{moreno_st, fridman1, levant1, levant2, levant3}. More precisely, Equation~\eqref{eq:st_system} under the perturbation defined in Equation~\eqref{eq:stable_perturb} establishes the origin $\{ x_1 = 0, x_2 = 0 \}$ as a global finite-time stable equilibrium point. This was proven geometrically in \cite{levant3}, using the homogeneity property of the controllers in \cite{levant2, orlov}, and via Lyapunov analysis in \cite{moreno_st}. The Lyapunov analysis in \cite{moreno_st} also establishes that, assuming the STA in \eqref{eq:st_system} is robust to the perturbation defined in \eqref{eq:stable_perturb}, then the STA is also robust to perturbations defined by 

\begin{equation}
\label{eq:stable_perturb2}
\begin{split}
    |p_1(x, t)| &\leq \delta_1 + \delta_2 \sqrt{|x_1| + x_2^2} \\
    |p_2(x, t)| &\leq D
\end{split}
\end{equation}

\noindent where $\delta_1, \delta_2 \in \mathbb{R}_{\geq}$ are constants. \cite{moreno_st} further shows that if $\delta_1 = 0$, $p_1(x, t)$ will vanish at the origin and the STA in \eqref{eq:st_system} will still converge to the origin in finite time. However, for perturbations that do not vanish at the origin in the $x_1$ channel (that is, $\delta_1 \neq 0$ such that $p_1(0, t) \neq 0$, the state trajectories will not generally converge to the origin, but will be globally ultimately bounded \cite{khalil}. This result is important, as in the DNN training development, the perturbation in the $x_1$ channel manifests from noise in the time derivative of the DNN input vector, from either numerical differentiation or measurement noise.

\section{Online DNN Updates Using Super-Twisting Control}


Assume an arbitrary function 2-times continuously differentiable with respect to time, $y$, which is to be approximated offline by an arbitrary-depth DNN as in Equation~\ref{eq:relu_dnn}. The system can be written in state space form as 

\begin{equation}
    \label{eq:ss_nom_sys}
    \begin{split}
        \dot{z}_1 &= z_2 \\
        \dot{z}_2 &= \ddot{y} \\
        y &= z_1 
    \end{split}
\end{equation}

\noindent where $z_1, z_2 \in \mathbb{R}^m$ are state variables. The DNN is trained offline on data generated from the ``nominal" system $y$ to some arbitrary approximation accuracy.

Next, we assume the trained DNN is implemented online to estimate the ``real" system, $y'$, which is a domain-shifted process of the nominal system, $y$, written as

\begin{equation}
    \label{eq:ss_real_sys}
    \begin{split}
        \dot{z}'_1 &= z'_2 \\
        \dot{z}'_2 &= \ddot{y}' \\
        y' &= z'_1 
    \end{split}
\end{equation}

\noindent where $z'_1, z'_2 \in \mathbb{R}^m$ are real-system state variables. The goal of this paper is to use control theory to find a provably-stable update law such that the DNN trained on the nominal system can perform effectively on the real system using feedback from an error signal. As stated above, this problem corresponds to domain shift and transfer learning problems common in using DNNs for reinforcement learning and forecasting, to name a few. In general, this problem is applicable to any DNN which is implemented to approximate a dynamical system online.

One possible method for the DNN-based model to learn online includes retraining the entire model on both the nominal and real system data at certain intervals during implementation, which can be inefficient as the training dataset grows, and typical backpropagation with gradient-based optimization does not inherently provide stability guarantees. Further, only retraining the model on the online real-system data acquired can induced ``catastrophic forgetting" \cite{online_learning_survey}, with the DNN parameters over-optimizing to the recent data. Taking inspiration from \cite{neural_fly}, we opt to only update the parameters of the last layer of the DNN, which is both efficient and preserves the overall feature representation of the DNN learned in the training data of the nominal system. In this way, the output of the next-to-last layer acts as a basis for function approximation of the last layer of the DNN. In \cite{neural_fly}, a custom adversarial learning algorithm is designed to maximize the independence of the learned basis output. In this work, we simply consider a DNN trained with conventional gradient-based optimization, due to its prevalence in modern AI/ML. We thus assume that the output of the next-to-last-layer, $\phi(a_{L-1})$, which can be considered as the learned representation of the important features of both $y$ and $y'$, is a suitable basis for approximation of the DNN's output layer.

To implement a controller for updating the last layer of the DNN online, we differentiate Equation~\eqref{eq:relu_dnn} with respect to time to get

\begin{equation}
    \label{eq:dnn_deriv}
    \dot{\hat{y}} = \frac{d f(x)}{dt} = \dot{W}_{L} \phi(a_{L-1}) + \dot{b}_L + \Gamma
\end{equation}

\noindent where $\hat{y} \in \mathbb{R}^m$ is the DNN output and $\Gamma = \Gamma(\dot{x}) = W_L \frac{d\phi(a_{L-1})}{dt}$ for notational simplicity. The control problem is to find update laws $\dot{W}_{L}$ and $\dot{b}_L$ to drive $e_1 = \hat{y} - y'  \rightarrow 0$ as $t \rightarrow \infty$.

Note that the activation derivative term $\frac{d\phi(a_{L-1})}{dt}$ term in $\Gamma$ is easily calculated using the chain rule, similar to backpropagation of error. The term $\Gamma$ is only a function of the time derivative of the DNN input vector, $\dot{x}$, since the weights and biases of the DNN are known. We will discuss two cases: (1) when $\dot{x}$ is known, and (2) when $\dot{x}$ is estimated or noisy. 

\subsection{Case I: Known $\dot{x}$}

The proposed online update laws for $W_L$, the output layer weight, and $b_L$, the output layer bias, are given as

\begin{equation}
    \label{eq:b_update_st}
    \dot{b}_L = \frac{1}{2}(-k_1 |e_1|^{1/2}\texttt{sign}(e_1) - \Gamma + \hat{z}_2)
\end{equation}

\begin{equation}
    \label{eq:W_update_st}
    \dot{W}_{L} = \dot{b}_L \frac{\phi^T(a_{L-1})}{||\phi(a_{L-1})||^2_2}
\end{equation}

\noindent where $\hat{z}_2$ is an augmented integral control term, evolved as

\begin{equation}
    \label{eq:z_update_st}
    \dot{\hat{z}}_2 = -k_2 \texttt{sign}(e_1)
\end{equation}

\noindent and $k_1, k_2 \in \mathbb{R}^+$ are again constant design gains. Note that in Equation~\eqref{eq:b_update_st}, $\Gamma = \Gamma(\dot{x})$ is used, since $\dot{x}$ is known in this case.

\begin{theorem}
Suppose the approximation target $y'$ is 2-times continuously differentiable, and its second time derivative is bounded such that $|\ddot{y}'| \leq D_y$. Suppose the the activation derivative term $\Gamma = W_L \frac{d\phi(a_{L-1})}{dt}$ is known. Further, suppose $y'$ is approximated by the DNN given in Equation~\eqref{eq:relu_dnn}, which is trained offline on data generated from $y$, and the NN input $x$ is continuously differentiable with respect to time. Then, for every $D_y > 0$, there exist design gains $k_1$, $k_2$ such that the last-layer update rules given in Equations~\eqref{eq:b_update_st}, \eqref{eq:W_update_st}, and \eqref{eq:z_update_st} cause $e_1 = 0$ to be a robustly, globally, finite-time stable equilibrium point.
\end{theorem}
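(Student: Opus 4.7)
The plan is to rewrite the closed-loop error dynamics in the canonical super-twisting form of Equation~\eqref{eq:st_system} and then invoke the stability result of \cite{moreno_st} under the perturbation bound of Equation~\eqref{eq:stable_perturb}.

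First I would define the output error $e_1 = \hat{y} - y'$ and differentiate: using Equation~\eqref{eq:dnn_deriv} and the real-system dynamics \eqref{eq:ss_real_sys},
\begin{equation*}
\dot{e}_1 = \dot{W}_L \phi(a_{L-1}) + \dot{b}_L + \Gamma - z'_2.
\end{equation*}
The key algebraic observation is that the weight update \eqref{eq:W_update_st} is engineered so that
\begin{equation*}
\dot{W}_L \phi(a_{L-1}) = \dot{b}_L\,\frac{\phi^T(a_{L-1})\phi(a_{L-1})}{\|\phi(a_{L-1})\|_2^2} = \dot{b}_L,
\end{equation*}
collapsing the weight and bias contributions to $2\dot{b}_L$. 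Substituting the bias update \eqref{eq:b_update_st}, the $\Gamma$ terms cancel by design and I obtain
\begin{equation*}
\dot{e}_1 = -k_1|e_1|^{1/2}\texttt{sign}(e_1) + (\hat{z}_2 - z'_2).
\end{equation*}

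Next I would introduce the auxiliary error $e_2 = \hat{z}_2 - z'_2$. Using Equation~\eqref{eq:z_update_st} together with $\dot{z}'_2 = \ddot{y}'$, this yields
\begin{equation*}
\dot{e}_2 = -k_2\,\texttt{sign}(e_1) - \ddot{y}'.
\end{equation*}
The pair $(e_1,e_2)$ then obeys exactly Equation~\eqref{eq:st_system} with $p_1 \equiv 0$ and $p_2 = -\ddot{y}'$. The hypothesis $|\ddot{y}'| \leq D_y$ forces $|p_2| \leq D_y$, so the perturbation bound \eqref{eq:stable_perturb} holds and the STA result quoted from \cite{moreno_st,levant3} supplies gains $k_1, k_2$ (depending only on $D_y$) that render $(e_1,e_2)=0$ a robustly, globally, finite-time stable equilibrium; projecting onto the first channel gives the theorem.

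The main obstacles I anticipate are two implicit technicalities rather than the algebra itself. First, Equation~\eqref{eq:W_update_st} is singular when $\phi(a_{L-1}) = 0$, so I would either insist on an activation whose output cannot vanish simultaneously across all next-to-last-layer neurons (e.g.\ sigmoid, or a layer with an appended constant feature), or restrict the trajectory analysis to the complement of that measure-zero set. Second, the stability theorem of \cite{moreno_st} is stated for scalar $(x_1,x_2)$, whereas here $e_1, e_2 \in \mathbb{R}^m$; since $|\cdot|^{1/2}\texttt{sign}(\cdot)$ is applied componentwise and the bound $|\ddot{y}'|\leq D_y$ applies per component, I would argue that the closed loop decouples into $m$ independent scalar super-twisting systems and the scalar finite-time stability result lifts directly to the vector case.
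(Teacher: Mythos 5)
Your proposal follows essentially the same route as the paper's proof: substitute the update laws \eqref{eq:b_update_st}--\eqref{eq:z_update_st} into \eqref{eq:dnn_deriv}, obtain the error dynamics \eqref{eq:error_dyn_st}, identify them with the super-twisting system \eqref{eq:st_system} under the perturbation class \eqref{eq:stable_perturb} with $p_1 \equiv 0$ and $|p_2| = |\ddot{y}'| \leq D_y$, and invoke the Lyapunov result of \cite{moreno_st}. Your version is in fact slightly more careful than the paper's, since you make explicit the cancellation $\dot{W}_L \phi(a_{L-1}) = \dot{b}_L$ that the paper glosses over, and you correctly flag the two technicalities it leaves implicit (the singularity of \eqref{eq:W_update_st} when $\phi(a_{L-1}) = 0$, and the componentwise lifting of the scalar STA result to $e_1, e_2 \in \mathbb{R}^m$).
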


\begin{proof}
Substituting the update laws in Equations~\eqref{eq:b_update_st}-\eqref{eq:z_update_st} into Equation~\eqref{eq:dnn_deriv}, we get the system

\begin{equation}
    \label{eq:relu_dnn_st_form}
    \begin{split}
        \dot{\hat{z}}_1 &= -k_1 |e_1|^{1/2}\texttt{sign}(e_1) + \hat{z}_2 \\
        \dot{\hat{z}}_2 &= -k_2 \texttt{sign}(e_1) \\
        \hat{y} &= \hat{z}_1 .
    \end{split}
\end{equation}

\noindent Subtracting the system in \eqref{eq:relu_dnn_st_form} by the real system given in \eqref{eq:ss_real_sys} gives the DNN error dynamics under the proposed online STA update given in Equations~\eqref{eq:b_update_st}-\eqref{eq:z_update_st}:

\begin{equation}
    \label{eq:error_dyn_st}
    \begin{split}
        \dot{e}_1 &= -k_1 |e_1|^{1/2}\texttt{sign}(e_1) + e_2 \\
        \dot{e}_2 &= -k_2 \texttt{sign}(e_1) - \ddot{y}' \\
    \end{split}
\end{equation}

\noindent where $e_2 = \hat{z}_2 - z'_2$. Since it is assumed the real system signal is bounded such that $|\ddot{y}'| \leq D_y$, the system given in \eqref{eq:error_dyn_st} is equivalent to the robust STA system in \eqref{eq:st_system} under the perturbation given in \eqref{eq:stable_perturb}, with $p_1(x,t) = 0$ and $|p_2(x,t)| = |\ddot{y}'| \leq D_y$.

The rest of the proof follows from the proof of Theorem 2 in the Appendix of \cite{moreno_st}.

\end{proof}

\subsection{Case II: Unknown or Estimated $\dot{x}$}

In some applications, the activation derivative term $\frac{d\phi(a_{L-1})}{dt}$ term is noisy and/or estimated. This can happen in some model reference adaptive control cases, specifically when the time derivative of the DNN input vector, $\dot{x}$, must be numerically differentiated for (or is calculated using noisy measurements). This can be seen by calculating $\frac{d\phi(a_{L-1})}{dt}$ for the DNN given in Equation~\eqref{eq:relu_dnn}:

\begin{equation}
    \label{eq:activ_deriv}
    \frac{d\phi(a_{L-1})}{dt} = \phi'(a_{L-1}) \odot W_{L-1} ( \phi'(a_{L-2}) \odot W_{L-2} ( \cdots \phi'(a_{1}) \odot W_1 \dot{x}))
\end{equation}

\noindent where $\odot$ denotes the Hadamard product, $a_i = W_i \phi(a_{i-1}) + b_i$ is the output of the $i$\textsuperscript{th} DNN layer, and $\phi'(z) = d\phi/dz$ denotes the activation function derivative. In Equation~\eqref{eq:activ_deriv}, the only potentially unknown term is $\dot{x}$, since the weights and biases of the DNN are known. Using Equation~\eqref{eq:activ_deriv}, we can expand $\Gamma = W_L \frac{d\phi(a_{L-1})}{dt}$ as

\begin{equation}
    \label{eq:gamma}
    \Gamma = W_L (\phi'(a_{L-1}) \odot W_{L-1} ( \phi'(a_{L-2}) \odot W_{L-2} ( \cdots  \phi'(a_{1}) \odot W_1 \dot{x}))) .
\end{equation}

\noindent Denoting the estimate of $\Gamma$ as $\hat{\Gamma} = \Gamma(\dot{\hat{x}})$, we can similarly write

\begin{equation}
    \label{eq:gamma_hat}
    \hat{\Gamma} = W_L (\phi'(a_{L-1}) \odot W_{L-1} ( \phi'(a_{L-2}) \odot W_{L-2} ( \cdots  \phi'(a_{1}) \odot W_1 \dot{\hat{x}})))
\end{equation}

\noindent where $\dot{\hat{x}}$ denotes the estimate of $\dot{x}$. The proposed STA online update rules are thus modified to use the estimate $\hat{\Gamma}$:

\begin{equation}
    \label{eq:b_update_st2}
    \dot{b}_L = \frac{1}{2}(-k_1 |e_1|^{1/2}\texttt{sign}(e_1) - \hat{\Gamma} + \hat{z}_2)
\end{equation}

\begin{equation}
    \label{eq:W_update_st2}
    \dot{W}_{L} = \dot{b}_L \frac{\phi^T(a_{L-1})}{||\phi(a_{L-1})||^2_2}
\end{equation}

\begin{equation}
    \label{eq:z_update_st2}
    \dot{\hat{z}}_2 = -k_2 \texttt{sign}(e_1)
\end{equation}

\noindent where $k_1, k_2 \in \mathbb{R}^+$ are again constant design gains. In this case, the estimate of $\hat{\Gamma}$ will introduce a perturbation term in the $x_1$ channel of the system in \eqref{eq:st_system}, which will cause the error system trajectories to not converge to zero but to be ultimately bounded.











    


Since it is desirable to decrease the error trajectory bound, we will explore how to quantify and control the bounds on $\Gamma$ and $\hat{\Gamma}$. The works \cite{neural_fly} and \cite{neural_lander} have shown that training DNNs under spectral normalization (SN) can be used to derive convenient stability guarantees when using DNNs in dynamical systems. Namely, spectral normalization controls the Lipschitz constant of a DNN, which defines the bound on the DNN output from a bounded input \cite{bartlett_sn}. \cite{neural_fly}, \cite{neural_lander}, and \cite{bartlett_sn} each show that SN training can improve out-of-domain generalization on DNN predictions and stabilizes training, especially in sensitive network structures such as GANs \cite{miyato_gan}.

A real function $g$ is Lipschitz continuous if there exists a constant $\gamma_{Lip} \in \mathbb{R}^+$ such that

\begin{equation}
    \label{eq:lipschitz}
    \frac{|| g(z_1) - g(z_2) ||_2}{|| z_1 - z_2 ||_2} \leq \gamma_{Lip}
\end{equation}

\noindent for any $z_1, z_2$ in the domain of $g$. The smallest constant $\gamma_{Lip}$ satisfying \eqref{eq:lipschitz}, $||g||_{Lip}$, is called the Lipschitz constant, which provides a convenient bound for relating function output given a bounded input. Following the analysis in \cite{neural_lander} and \cite{miyato_gan}, the Lipschitz constant of a function $g$ is the maximum singular value of its gradient in the domain, written as $||g||_{Lip} = \texttt{sup}_z \sigma (\nabla g(z))$, where $\sigma(\cdot)$ denotes the maximum singular value operator. The DNN in Equation~\eqref{eq:relu_dnn} is a composition of functions, recursively $h_i(z) = W_i z_{i-1} + b_i$ and $z_{i-1} = \phi(a_{i-1})$ for the $i$\textsuperscript{th} layer. Further, the Lipschitz constant for a composition of functions $g_L \circ g_{L-1} \circ \cdots \circ g_1$ is bounded by the inequality

\begin{equation}
    \label{eq:lipschitz_ineq}
    || g_L \circ g_{L-1} \circ \cdots \circ g_1 ||_{Lip} \leq || g_L ||_{Lip} || g_{L-1} ||_{Lip} \cdots || g_{1} ||_{Lip} .
\end{equation}

\noindent Since $||g||_{Lip} = \texttt{sup}_z \sigma (\nabla g(z))$,  the Lipschitz constant of the $i$\textsuperscript{th} DNN layer is $||h_i||_{Lip} = \texttt{sup}_{z_{i-1}} \sigma (\nabla (W_i z_{i-1} + b_i)) = \texttt{sup}_{z_{i-1}} \sigma(W_i) = \sigma(W_i)$. That is, the Lipschitz constant of the $i$\textsuperscript{th} DNN layer is the maximum singular value of the weight matrix $W_i$. This can be calculated by $\sigma(W_i) = \texttt{max}(\sqrt{\lambda(W_i^T W_i)})$, where $\lambda(\cdot)$ is the eigenvalue operator. Using the inequality in \eqref{eq:lipschitz_ineq}, the Lipschitz constant for the DNN in Equation~\eqref{eq:relu_dnn} can be bounded by

\begin{equation}
    \label{eq:dnn_lips}
    || f(x) ||_{Lip} \leq \sigma(W_L) || \phi ||_{Lip} \sigma(W_{L-1}) || \phi ||_{Lip} \cdots \sigma(W_{1})
\end{equation}

\noindent where the Lipschitz constant of the activation functions $|| \phi ||_{Lip}$ can be easily found based on the activation function used in the DNN. The ReLU activation function is defined as $\phi(a_i) = \texttt{max}(0, a_i)$. Its derivative can be written as 

\begin{equation}
\label{eq:relu_deriv}
{\phi'(a_i)} =
\begin{cases}
1&{\text{if}}\ a_i \geq 0,\\
{0}&{\text{otherwise}}
\end{cases}
\end{equation}

\noindent which shows a Lipschitz constant $|| \phi ||_{Lip} = 1$,. We can thus bound the Lipschitz constant of the DNN in Equation~\eqref{eq:relu_dnn} with ReLU activations by

\begin{equation}
    \label{eq:dnn_lips_relu}
    || f(x) ||_{Lip} \leq \prod_{i=1}^L \sigma(W_i)
\end{equation}

\noindent which follows directly from \eqref{eq:dnn_lips}. To control $|| f(x) ||_{Lip}$ during training, we implement Algorithm~\ref{alg:alg1}, where $\gamma$ is the desired upper bound on the Lipschitz constant of the DNN in Equation~\eqref{eq:relu_dnn} with ReLU activations, such that $|| f(x) ||_{Lip} \leq \gamma$ \cite{neural_lander}.

\begin{algorithm}[H]
\caption{Spectrally normalized ReLU DNN training.}\label{alg:alg1}
\begin{algorithmic}
\STATE 

\textbf{for} epoch \textbf{in} range(max\_epochs)

\STATE \hspace{0.5cm}$\textbf{optimize NN parameters } W_i, b_i \textbf{ for } i = 1,2, \cdots, L$
\STATE \hspace{0.5cm}$\textbf{for } i = 1, 2, \cdots, L:$
\STATE \hspace{0.5cm}\hspace{0.5cm}$\sigma_i = \sigma(W_i)$
\STATE \hspace{0.5cm}\hspace{0.5cm}$\textbf{if } \sigma_i > \gamma^{1/L}:$
\STATE \hspace{0.5cm}\hspace{0.5cm}\hspace{0.5cm}$W_i \gets \frac{W_i}{\sigma_i} \gamma^{1/L}$
\STATE \hspace{0.5cm}\hspace{0.5cm}$\textbf{else: }$
\STATE \hspace{0.5cm}\hspace{0.5cm}\hspace{0.5cm}$\textbf{continue }$

\end{algorithmic}
\end{algorithm}

Note that in Algorithm~\ref{alg:alg1}, the normalizing weight update $W_i \gets \frac{W_i}{\sigma_i} \gamma^{1/L}$ upper bounds the Lipschitz constant of the $i$\textsuperscript{th} DNN layer $h_i$ by $||h_i||_{Lip} \leq \gamma^{1/L}$.






\begin{lemma}\label{lemma:gamma_bound}
Consider the DNN given in Equation~\eqref{eq:relu_dnn}, with activation functions $\phi(\cdot)$ defined as the rectified linear unit (ReLU) function, trained with spectral normalization via Algorithm~\ref{alg:alg1}. If the Lipschitz constant of the DNN is upper bounded by $|| f(x) ||_{Lip} \leq \gamma$, then the Lipschitz constant of $\Gamma$ is also upper bounded by $|| \Gamma ||_{Lip} \leq \gamma$.
\end{lemma}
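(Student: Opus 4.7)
The plan is to view $\Gamma$ as a linear map in the variable $\dot{x}$, treating $x$ (and hence the activation patterns $\phi'(a_i)$) as a fixed parameter, and then bound its operator norm via the product of the spectral norms of its factors. First, I would rewrite each Hadamard product $\phi'(a_i)\odot v$ as $D_i v$, where $D_i = \mathrm{diag}(\phi'(a_i))$ is a diagonal matrix. Because the ReLU derivative in Equation~\eqref{eq:relu_deriv} takes values only in $\{0,1\}$, every entry of $D_i$ lies in $\{0,1\}$, so $\sigma(D_i)\le 1$ holds uniformly in $x$.

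With this substitution, Equation~\eqref{eq:gamma} becomes the composition of linear maps
\begin{equation*}
\Gamma(\dot{x}) \;=\; W_L\, D_{L-1}\, W_{L-1}\, D_{L-2}\, \cdots\, D_1\, W_1\, \dot{x},
\end{equation*}
whose Lipschitz constant equals its operator norm. Applying the composition inequality~\eqref{eq:lipschitz_ineq}, together with the fact that for a linear/affine layer $\|W_i(\cdot)\|_{Lip}=\sigma(W_i)$ and for the diagonal mask $\|D_i(\cdot)\|_{Lip}=\sigma(D_i)$, I obtain
\begin{equation*}
\|\Gamma\|_{Lip} \;\le\; \sigma(W_L)\,\sigma(D_{L-1})\,\sigma(W_{L-1})\,\cdots\,\sigma(D_1)\,\sigma(W_1).
\end{equation*}

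To close the bound, I would invoke Algorithm~\ref{alg:alg1}, which enforces $\sigma(W_i)\le \gamma^{1/L}$ for every $i=1,\dots,L$, and combine this with $\sigma(D_i)\le 1$ from the preceding paragraph. Multiplying the $L$ factors of $\gamma^{1/L}$ and the $L-1$ factors bounded by $1$ yields $\|\Gamma\|_{Lip}\le (\gamma^{1/L})^L = \gamma$, which matches the bound already established for $\|f(x)\|_{Lip}$ in Equation~\eqref{eq:dnn_lips_relu}.

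The main subtlety, rather than a technical obstacle, is being explicit about the variable in which $\Gamma$ is Lipschitz. Because $\Gamma$ depends on $\dot{x}$ linearly and on $x$ only through the active pattern that selects $D_i$, the bound on $\sigma(D_i)$ is uniform over $x$; hence the operator-norm estimate on the map $\dot{x}\mapsto\Gamma$ holds independently of the current network input, which is exactly what is needed to upper bound the perturbation $|\Gamma(\dot{x})-\hat{\Gamma}(\dot{\hat{x}})|$ that drives the $x_1$-channel analysis in Case II.
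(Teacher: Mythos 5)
Your proposal is correct and follows essentially the same route as the paper's proof: decompose $\Gamma$ into an alternating product of the weight matrices $W_i$ and the diagonal ReLU-derivative masks $\mathrm{diag}(\phi'(a_i))$, bound each factor's Lipschitz constant by its maximum singular value ($\sigma(W_i)$ for the weights, at most $1$ for the masks), and multiply using the composition inequality together with the spectral-normalization constraint. Your explicit remark that the Lipschitz bound is taken with respect to $\dot{x}$ (uniformly over the activation pattern induced by $x$) is a slightly sharper statement of the same argument, and your $\sigma(D_i)\le 1$ and $\prod_i\sigma(W_i)\le\gamma$ are marginally more careful than the paper's equalities, but the mathematical content is identical.
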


\begin{proof}
From Equation~\eqref{eq:gamma}, it can be seen that $\Gamma$ is a recursion of linear operators, similar to the analysis above for the ReLU DNN itself. Equation~\eqref{eq:gamma} can be rewritten as a recursive composition of functions as

\begin{equation}
    \label{eq:gamma2}
    \Gamma = h'_L \circ z'_{L-1} \circ h'_{L-1} \circ \cdots \circ z'_{1} \circ h'_1
\end{equation}

\noindent where $h'_i = W_i z'_{i-1}$ and $z'_{i-1} = \phi'(a_{i-1}) \odot h'_{i-1}$. The Lipschitz constant of a linear map is its maximum singular value of its gradient, giving $||h'_i||_{Lip} = \sigma(W_i)$. The function $z'_{i-1}$ can be rewritten as the linear map $z'_{i-1} = \texttt{diag}(\phi'(a_{i-1})) h'_{i-1}$ such that $||z'_{i-1}||_{Lip} = \sigma(\texttt{diag}(\phi'(a_{i-1})))$. The maximum singular value of a diagonal matrix is equivalent to the infinity norm of its diagonal. With the ReLU derivative given in Equation~\eqref{eq:relu_deriv}, $||z'_{i-1}||_{Lip} = ||\phi'(a_{i-1})||_{\infty} = 1$. From the inequality in \eqref{eq:lipschitz_ineq}, the Lipschitz constant of $\Gamma$ is thus upper bounded by

\begin{equation}
    \label{eq:gamma_lip_bound}
    ||\Gamma||_{Lip} \leq \sigma(W_i) \sigma(W_{i-1}) \cdots \sigma(W_1) = \prod_{i=1}^L \sigma(W_i) .
\end{equation}

\noindent Using Algorithm~\ref{alg:alg1} to train the DNN controls the singular values of the DNN weights, such that $\prod_{i=1}^L \sigma(W_i) = \gamma$. Thus, the Lipschitz constant of $\Gamma$ is upper bounded by

\begin{equation}
    \label{eq:gamma_lip_bound_last}
    ||\Gamma||_{Lip} \leq \gamma .
\end{equation}

\end{proof}

\begin{theorem}\label{thm2}
Suppose the approximation target $y'$ is 2-times continuously differentiable, and its second time derivative is bounded such that $|\ddot{y}'| \leq D_y$. Suppose the the activation derivative term $\Gamma = W_L \frac{d\phi(a_{L-1})}{dt}$ is estimated by $\hat{\Gamma}$. Further, suppose $y'$ is approximated by the DNN given in Equation~\eqref{eq:relu_dnn} with ReLU activation functions, which is trained offline on data generated from $y$ via Algorithm~\ref{alg:alg1}, and the NN input $x$ is continuously differentiable with respect to time. Then, for every $D_y > 0$, there exist design gains $k_1$, $k_2$ such that the last-layer update rules given in Equations~\eqref{eq:b_update_st2}, \eqref{eq:W_update_st2}, and \eqref{eq:z_update_st2} cause the error trajectories $e_1, e_2$ to be globally ultimately bounded for a small enough $\delta_2$ in Equation~\eqref{eq:stable_perturb2}.
\end{theorem}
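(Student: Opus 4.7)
The plan is to mirror the substitution step of Theorem~1, but to carry along the mismatch $\tilde\Gamma = \Gamma - \hat\Gamma$ as an $x_1$-channel perturbation of the STA, and then invoke the robust-UBB result cited below \eqref{eq:stable_perturb2} from \cite{moreno_st}. First I would substitute \eqref{eq:b_update_st2}--\eqref{eq:z_update_st2} into \eqref{eq:dnn_deriv}: the choice \eqref{eq:W_update_st2} again collapses $\dot W_L \phi(a_{L-1})$ to $\dot b_L$, so the algebra that produced \eqref{eq:relu_dnn_st_form} reappears with one change --- the $x_1$-equation picks up the residual $\tilde\Gamma$, because the update law supplies $\hat\Gamma$ where the exact derivative requires $\Gamma$. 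Subtracting the real system \eqref{eq:ss_real_sys} and setting $e_2 = \hat z_2 - z'_2$ then yields an error system in the STA form \eqref{eq:st_system} with $p_1 = \tilde\Gamma$ and $p_2 = -\ddot y'$.

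Next, the two perturbation channels must be cast in the template \eqref{eq:stable_perturb2}. The bound $|p_2| \leq D_y$ is immediate from the hypothesis on $\ddot y'$. For $p_1$, Lemma~\ref{lemma:gamma_bound} gives $\|\Gamma\|_{Lip} \leq \gamma$ because the DNN is trained under Algorithm~\ref{alg:alg1}; since $\Gamma$ and $\hat\Gamma$ are the same map evaluated at $\dot x$ and $\dot{\hat x}$, Lipschitz continuity yields $|p_1| \leq \gamma \|\dot x - \dot{\hat x}\|$. Decomposing the differentiation/measurement error as $\|\dot x - \dot{\hat x}\| \leq c_1 + c_2 \sqrt{|e_1| + e_2^2}$ (a constant noise floor plus any state-dependent amplification) puts $p_1$ in the required form with $\delta_1 = \gamma c_1$ and $\delta_2 = \gamma c_2$. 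With both bounds in hand, the conclusion is the Lyapunov argument of \cite{moreno_st}: for $\delta_2$ small enough, one can pick $k_1, k_2$ as functions of $D_y, \delta_1, \delta_2$ so that the strict Lyapunov function constructed there has a derivative that is negative-definite outside a compact sublevel set, giving global ultimate boundedness of $(e_1, e_2)$.

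The main obstacle is the ``small enough $\delta_2$'' clause: the Moreno--Osorio bound collapses once $\delta_2$ exceeds a threshold determined by $k_1$, so the clause is a genuine constraint rather than a cosmetic one. Spectral normalization is exactly what makes that constraint checkable by design rather than by luck --- since $\delta_2 = \gamma c_2$ scales linearly with the Lipschitz budget $\gamma$ set in Algorithm~\ref{alg:alg1}, tightening $\gamma$ simultaneously enforces the smallness condition demanded by \cite{moreno_st} and shrinks the residual ultimate bound on $e_1$, linking the learning-side device (Algorithm~\ref{alg:alg1}) to the control-side robustness result.
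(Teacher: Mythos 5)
Your proposal is correct and follows essentially the same route as the paper's proof: substitute the modified update laws into \eqref{eq:dnn_deriv} to obtain the error dynamics with $p_1 = \Gamma - \hat{\Gamma}$ and $p_2 = -\ddot{y}'$, bound $p_1$ via Lemma~\ref{lemma:gamma_bound}, and invoke the robustness result of \cite{moreno_st}. If anything, you are more explicit than the paper about the one step it leaves implicit --- recasting the bound $\gamma\|\dot{x} - \dot{\hat{x}}\|_2$ into the template of \eqref{eq:stable_perturb2} with $\delta_1 = \gamma c_1$, $\delta_2 = \gamma c_2$ so that the cited theorem actually applies.
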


\begin{proof}

Substituting the update laws in Equations~\eqref{eq:b_update_st2}-\eqref{eq:z_update_st2} into Equation~\eqref{eq:dnn_deriv}, we now get the system

\begin{equation}
    \label{eq:relu_dnn_st_form2}
    \begin{split}
        \dot{\hat{z}}_1 &= -k_1 |e_1|^{1/2}\texttt{sign}(e_1) + \hat{z}_2 + \Gamma - \hat{\Gamma} \\
        \dot{\hat{z}}_2 &= -k_2 \texttt{sign}(e_1) \\
        \hat{y} &= \hat{z}_1 .
    \end{split}
\end{equation}

\noindent Subtracting the system in \eqref{eq:relu_dnn_st_form2} by the real system given in \eqref{eq:ss_real_sys} gives the DNN error dynamics under the modified online STA update, given in Equations~\eqref{eq:b_update_st2}-\eqref{eq:z_update_st2}:

\begin{equation}
    \label{eq:error_dyn_st2}
    \begin{split}
        \dot{e}_1 &= -k_1 |e_1|^{1/2}\texttt{sign}(e_1) + e_2 + \Gamma - \hat{\Gamma} \\
        \dot{e}_2 &= -k_2 \texttt{sign}(e_1) - \ddot{y}'.
    \end{split}
\end{equation}

\noindent Since it is assumed $|\ddot{y}'| \leq D_y$, the system given in \eqref{eq:error_dyn_st} is equivalent to the STA system in \eqref{eq:st_system} with perturbation terms $p_1(x,t) = \Gamma - \hat{\Gamma}$ and $p_2(x,t) = -\ddot{y}'$. The perturbation in the $e_2$ channel, $p_2$, is assumed to be upper-bounded by $|p_2(x,t)| = |\ddot{y}'| \leq D_y$.

Investigating the perturbation in the $e_1$ channel, $p_1(x,t) = \Gamma - \hat{\Gamma}$, we can write

\begin{equation}
    \label{eq:p1_proof2}
    \frac{||\Gamma - \hat{\Gamma}||_2}{||\dot{x} - \dot{\hat{x}}||_2} \leq ||\Gamma||_{Lip} 
\end{equation}

\noindent directly from the definition of Lipschitz continuity in \eqref{eq:lipschitz}. Using Lemma~\ref{lemma:gamma_bound} and rearranging, $p_1(x, t)$ is upper bounded by

\begin{equation}
    \label{eq:p1_bound_proof2}
    ||p_1(x, t)||_2 \leq \gamma ||\dot{x} - \dot{\hat{x}}||_2 .
\end{equation}

\noindent That is, the disturbance in the $e_1$ channel is bounded by the desired Lipschitz bound on the DNN times the error in the time derivative of the DNN input vector. Thus, in the presence of noise in $\dot{x}$ via measurements or numerical differentiation, training the DNN with spectral normalization can give the control designer quantifiable upper bounds on trajectory convergence for the proposed online last-layer update rules in Equations~\eqref{eq:b_update_st2}, \eqref{eq:W_update_st2}, and \eqref{eq:z_update_st2}.

The rest of the proof follows from the proof of Theorem 3 in the Appendix of \cite{moreno_st}. 

\end{proof}

\section{Simulation Examples}

\textit{Case I:} Consider the Van der Pol system in \eqref{eq:vanderpol}, with the training (or ``nominal") system value $\epsilon = 1$, assuming the dynamics $\dot{\theta} = z$ are known. The DNN in Equation~\eqref{eq:relu_dnn} with 4 layers of 8, 16, 8, 1 neurons, respectively, is trained with ReLU activation functions and spectral normalization via Algorithm~\ref{alg:alg1}, with $\gamma = 32$. The training dataset of features $\mathcal{X} = \{z, \theta\}$ and label dataset of $\mathcal{Y} = \{ \dot{z} \}$ are generated by the nominal system for 30 seconds at time intervals of 0.01 seconds. The DNN is trained with the Adam optimizer for 20,000 epochs with a batch size of 8. The DNN is then implemented online onto the real system of $\epsilon = 1.5$, where the time derivative of the DNN input vector $\dot{x} = [\hat{y}, z]^T$ is used. Note that this input vector uses the DNN prediction $\hat{y}$, which approximates the learning target $\dot{z}$. The update rules of Equations~\eqref{eq:b_update_st}, \eqref{eq:W_update_st}, and \eqref{eq:z_update_st} are used, with $k_1 = 50$, $k_2 = 1$.

\textit{Case II.a:} The Van der Pol system in \eqref{eq:vanderpol} is again considered, with the training (or ``nominal") system value $\epsilon = 1$, assuming the dynamics of $\theta$ are known but noisy. The DNN in Equation~\eqref{eq:relu_dnn} with 4 layers of 8, 16, 8, 1 neurons, respectively, is trained with ReLU activation functions and spectral normalization via Algorithm~\ref{alg:alg1}, with $\gamma = 1$. The training dataset of features $\mathcal{X} = \{z, \theta\}$ and label dataset of $\mathcal{Y} = \{ \dot{z} \}$ is generated by the nominal system for 30 seconds at time intervals of 0.01 seconds. The DNN is trained with the Adam optimizer for 20,000 epochs with a batch size of 8. The DNN is then implemented online onto the real system of $\epsilon = 1.5$, where the time derivative of the DNN input vector $\dot{x} = [\hat{y}, z + 10 \sin{(20 \pi t)}]^T$ is used. The update rules of Equations~\eqref{eq:b_update_st2}, \eqref{eq:W_update_st2}, and \eqref{eq:z_update_st2} are used, with $k_1 = 50$, $k_2 = 1$.

\textit{Case II.b:} Case II.b is the same as Case II.a above, but the DNN is not trained with spectral normalization. The DNN is implemented online to predict the real system, where update rules of Equations~\eqref{eq:b_update_st2}, \eqref{eq:W_update_st2}, and \eqref{eq:z_update_st2} are used, with $k_1 = 50$, $k_2 = 1$.

The DNN outputs over time for each case, compared to the real system, are shown in Figure~\ref{fig:dnn_sys_full}. To compare the behavior of each case, the prediction error $e_1$ for each case is shown over time in Figure~\ref{fig:dnn_errors}. To compare the error analysis in Theorem~\ref{thm2}, $||\Gamma - \hat{\Gamma}||_2$ for Case II.a and Case II.b are plotted over time, along with the defined error bound in Equation~\eqref{eq:gamma_lip_bound_last}, in Figure~\ref{fig:dnn_gammas}.

\begin{figure}[!htbp]
\centering
\includegraphics[width=3in]{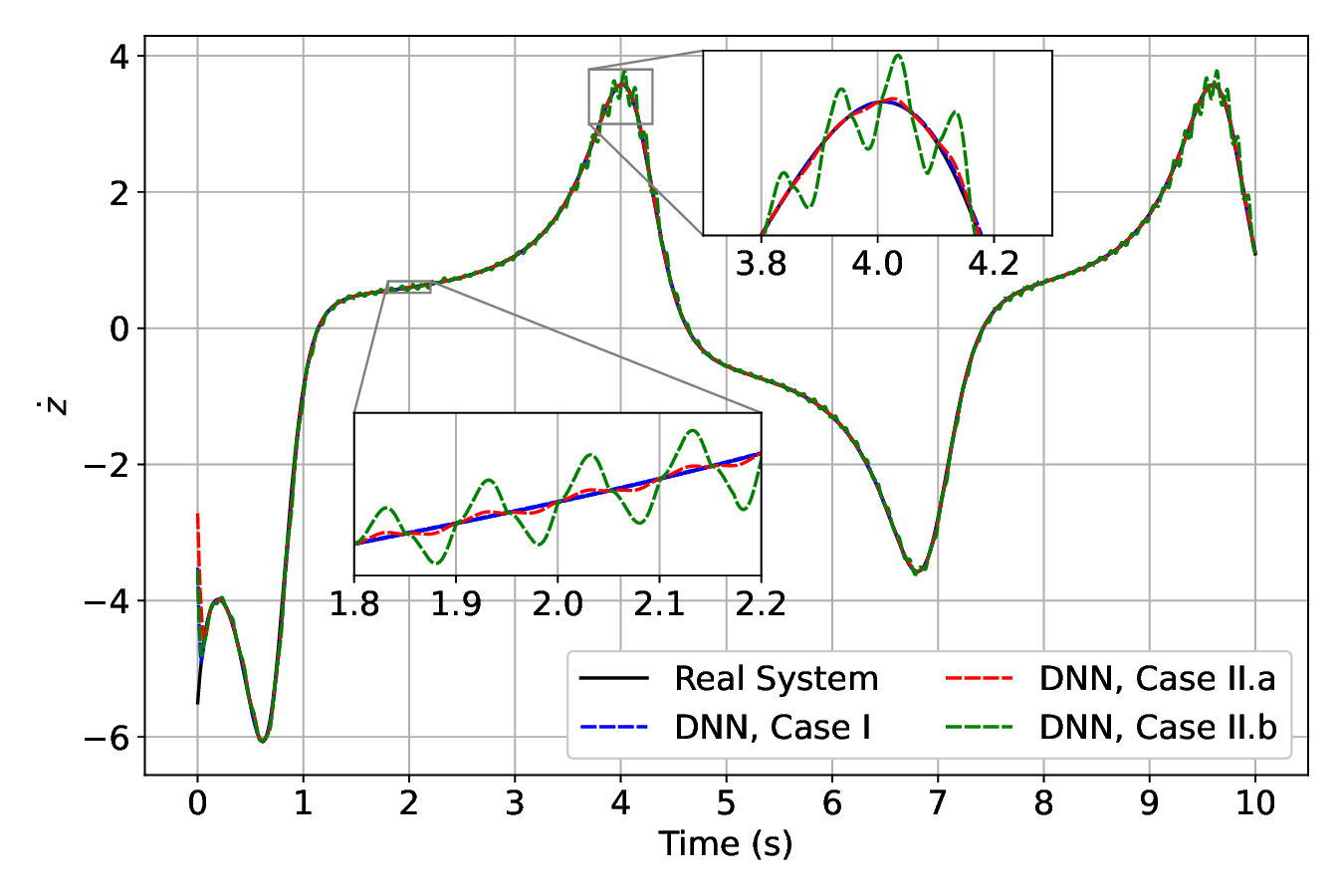}
\caption{Online-adapted DNN predictions on the real system ($\epsilon = 1.5$) for each case.}
\label{fig:dnn_sys_full}
\end{figure}

\begin{figure}[!htbp]
\centering
\includegraphics[width=3in]{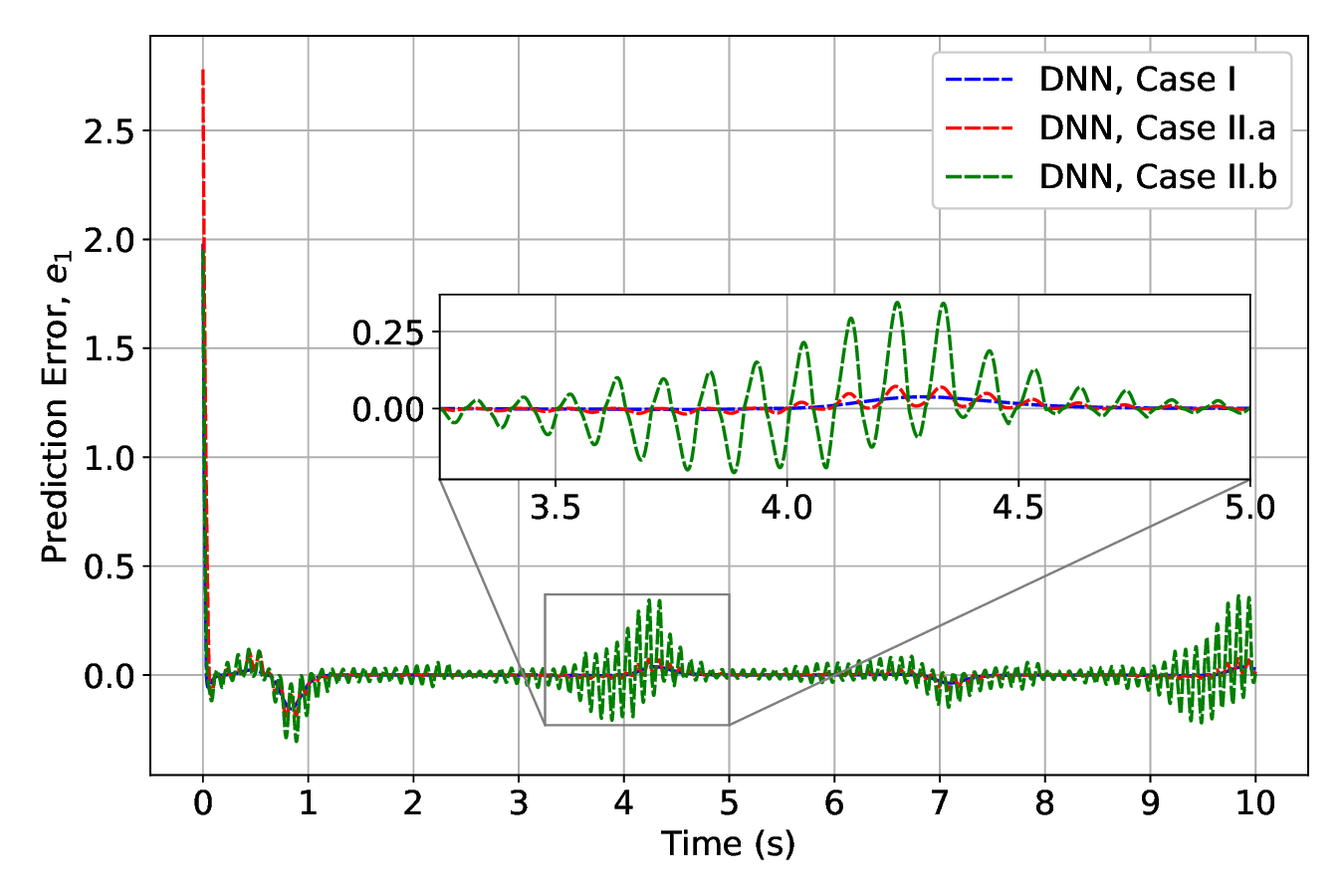}
\caption{DNN prediction error on the real system ($\epsilon = 1.5$) for each case.}
\label{fig:dnn_errors}
\end{figure}

\begin{figure}[!htbp]
\centering
\includegraphics[width=3in]{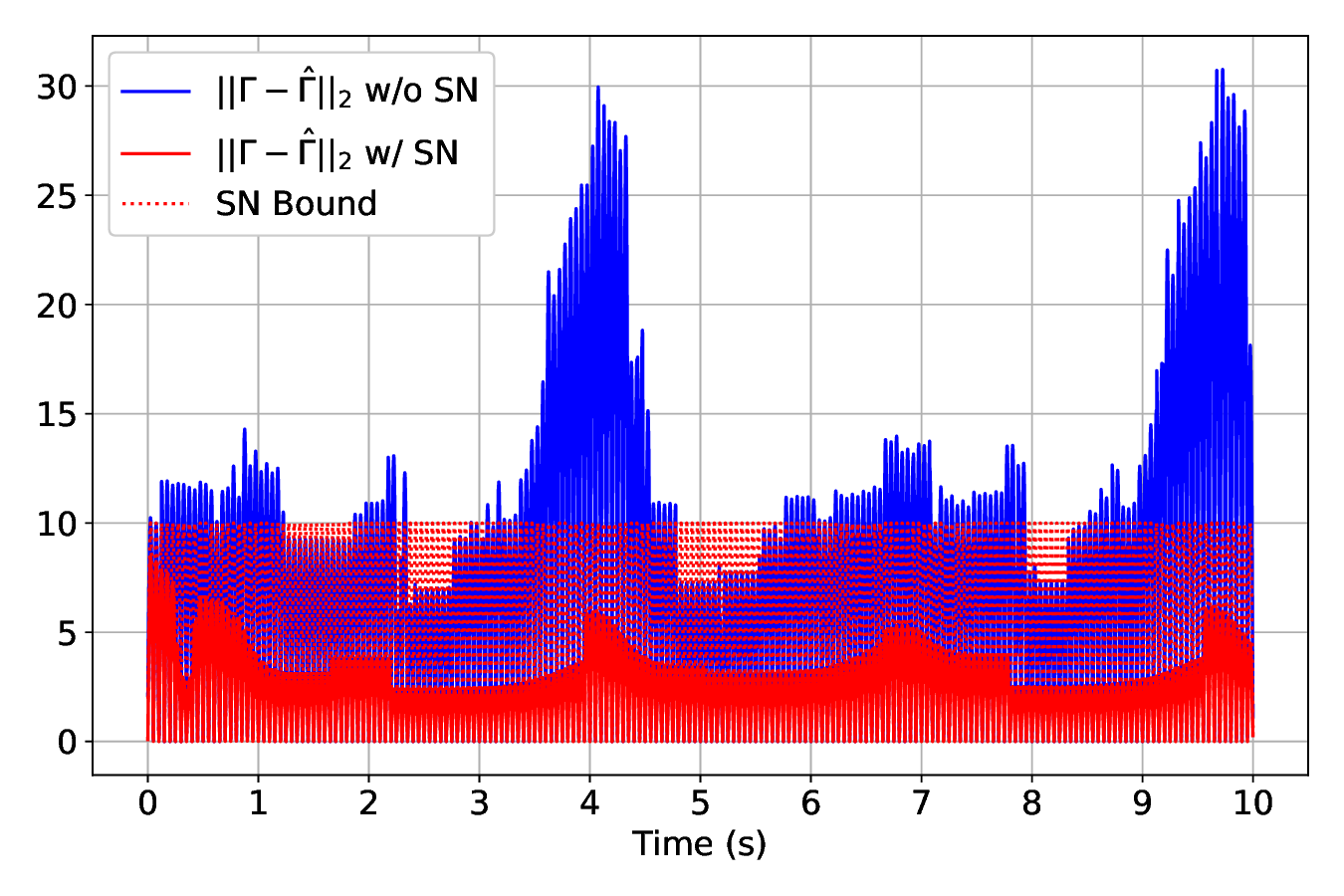}
\caption{Perturbation analysis on the real system ($\epsilon = 1.5$) for the DNN trained with SN (Case II.a) and the DNN trained without SN (Case II.b). The red dotted line represents the upper bound from \eqref{eq:gamma_lip_bound} at each timestep.}
\label{fig:dnn_gammas}
\end{figure}

In Figure~\ref{fig:dnn_sys_full}, each DNN successfully converges to near the real system using the proposed online update rules, after being trained on a small nominal system dataset. In Figure~\ref{fig:dnn_errors}, the perturbation in the $e_1$ channel from the disturbance in $\Gamma - \hat{\Gamma}$ now causes the $e_1$ trajectories in Case II.a and II.b to be ultimately bounded. However, when the DNN is trained with spectral normalization, the controlled Lipschitz constant upper bounds the disturbance $\Gamma - \hat{\Gamma}$, causing the upper bound of the $e_1$ trajectory to be smaller than the DNN trained without spectral normalization in Case II.b. In Figure~\ref{fig:dnn_gammas}, the upper bound on $\Gamma - \hat{\Gamma}$ from \eqref{eq:gamma_lip_bound} is plotted over time, along with the $\Gamma - \hat{\Gamma}$ disturbance from both Case II.a (w/ SN) and Case II.b (w/o SN). It can be seen from this figure that the DNN trained without spectral normalization causes a larger disturbance in the $e_1$ channel, where the DNN trained with spectral normalization obeys the proven disturbance bound in \eqref{eq:gamma_lip_bound}.

\section{Conclusion}

In this work, we study the use of control theoretic techniques for evolving deep neural networks (DNNs) online during inference, when the online distribution is shifted from the offline distribution used to train the DNN. We specifically consider when DNNs to approximate dynamical systems, which is of particular interest to transfer learning and reinforcement learning for forecasting, controls, and sim2real transfer. When the DNN is used to approximate a dynamical system online, evolving the itself DNN can be considered as a dynamical system to be controlled. We use the super-testing algorithm, a well-known sliding mode control algorithm, to evolve the last layer parameters of the DNN in continuous time. Last-layer evolution has been shown in transfer learning to effectively and efficiently provide updates to models from newly-acquired data while limiting catastrophic forgetting, where retraining the entire model can be expensive. Under our proposed online update rules, we give proofs of error trajectory convergence of the DNN outputs, which is desirable in many online DNN prediction scenarios, since conventional gradient-based backpropagation does not intrinsically provide any DNN performance guarantees or bounds. Further, when the time derivative of the DNN input vector is noisy, we show that training the DNN under spectral normalization can improve online DNN prediction performance, since the desired spectral normalization constant of the DNN upper bounds the prediction error trajectory convergence. We validate the proposed methodology for each described case in numerical simulation, with the DNN trained to approximate the dynamics of the Van der Pol oscillator online under domain shift. In each case, we show that our proposed method performs effectively online on the domain-shifted system and that the bounds given in the spectral normalization proofs do hold in simulation.

\section{Funding Acknowledgement}
Jacob G. Elkins is supported by the Department of Defense (DoD) through the National Defense Science and Engineering Graduate (NDSEG) Fellowship Program.



\end{document}